\documentclass[sigconf]{acmart}

\usepackage{microtype}
\usepackage{graphicx}
\usepackage{subcaption}
\usepackage{booktabs} 
\usepackage{threeparttable}
\usepackage{hyperref}
\usepackage{url}
\usepackage{amsthm}

\usepackage{amssymb}
\usepackage{tikz}
\usetikzlibrary{shadows}
\usepackage{booktabs}
\usepackage{balance}
\usepackage{xcolor}
\usepackage{thmtools,thm-restate}
\usepackage[framemethod=tikz]{mdframed}
\usepackage{wrapfig}
\usepackage{threeparttable}
\usepackage{enumitem}
\usepackage{dirtytalk}
\usepackage{multirow}
\usepackage{algorithm}
\usepackage{algorithmic}
\usepackage{thmtools}
\usepackage{placeins} 

\usepackage{hyperref}

\usepackage{amsmath}
\usepackage{amssymb}
\usepackage{mathtools}
\usepackage{amsthm}

\usepackage[capitalize,noabbrev]{cleveref}

\theoremstyle{plain}
\newtheorem{theorem}{Theorem}[section]

\theoremstyle{definition}

\newtheorem{assumption}[theorem]{Assumption}
\theoremstyle{remark}

\usepackage[textsize=tiny]{todonotes}

\AtBeginDocument{%
  }

\copyrightyear{2026}
\acmYear{2026}
\setcopyright{cc}
\setcctype{by}
\acmConference[KDD '26]{Proceedings of the 32nd ACM SIGKDD Conference on Knowledge Discovery and Data Mining V.2}{August 09--13, 2026}{Jeju Island, Republic of Korea}
\acmBooktitle{Proceedings of the 32nd ACM SIGKDD Conference on Knowledge Discovery and Data Mining V.2 (KDD '26), August 09--13, 2026, Jeju Island, Republic of Korea}
\acmDOI{10.1145/3770855.3817764}
\acmISBN{979-8-4007-2259-2/2026/08}

\begin{document}

\title{Uncertainty-Calibrated Diffusion for \\ Reliable 3D Molecular Graph Generation}

\author{Fang Wan}
\orcid{0009-0007-8394-1517}
\authornote{Equal contribution.}
\email{fanwan@cs.stonybrook.edu}
\affiliation{%
  \institution{State University of New York at Stony Brook}
  \city{Stony Brook}
  \state{New York}
  \country{USA}
}
\author{Jingxiang Qu}
\orcid{0000-0001-5625-4147}
\authornotemark[1]
\email{jingxiang.qu@stonybrook.edu}
\affiliation{%
  \institution{State University of New York at Stony Brook}
  \city{Stony Brook}
  \state{New York}
  \country{USA}
}
\author{Yi Liu}
\orcid{0000-0001-7405-7972}
\email{yi.liu.4@stonybrook.edu}
\affiliation{%
  \institution{State University of New York at Stony Brook}
  \city{Stony Brook}
  \state{New York}
  \country{USA}
}

\renewcommand{\shortauthors}{Wan et al.}

\begin{abstract}
  Bayesian inference provides a principled framework for modeling epistemic uncertainty in neural networks by treating predictions as distributions rather than deterministic values. Meanwhile, diffusion-based models for 3D molecular graph generation operate on fragile geometric structures governed by strict chemical constraints, making inference highly sensitive to uncertainty miscalibration. A largely overlooked issue is that epistemic uncertainty arising from the learned denoiser interacts with the aleatoric uncertainty intentionally injected during reverse diffusion, leading to systematic variance inflation and a mismatch between the true distribution and the simulated distribution. This effect is particularly detrimental for high-precision molecular generation, where even small deviations can violate chemical validity. In this work, we provide a theoretical and empirical analysis of how epistemic uncertainty propagates through diffusion inference and degrades sampling quality. Building on this investigation, we propose \textbf{UCD} (\textbf{U}ncertainty-\textbf{C}alibrated \textbf{D}iffusion), a simple yet effective method that calibrates the reverse diffusion process to account for epistemic uncertainty. Extensive experiments on standard 3D molecular benchmarks demonstrate that UCD consistently improves sampling quality across diverse baseline methods, establishing new state-of-the-art performance for 3D molecular diffusion. The code is available at
\hyperlink{https://github.com/jiuguaiwf/UCD}{https://github.com/jiuguaiwf/UCD}.
\end{abstract}

%
%
\begin{CCSXML}
<ccs2012>
   <concept>
       <concept_id>10010147.10010257.10010293.10011809.10011815</concept_id>
       <concept_desc>Computing methodologies~Generative and developmental approaches</concept_desc>
       <concept_significance>500</concept_significance>
       </concept>
   <concept>
       <concept_id>10010147.10010257.10010293.10010294</concept_id>
       <concept_desc>Computing methodologies~Neural networks</concept_desc>
       <concept_significance>500</concept_significance>
       </concept>
   <concept>
       <concept_id>10010147.10010178</concept_id>
       <concept_desc>Computing methodologies~Artificial intelligence</concept_desc>
       <concept_significance>500</concept_significance>
       </concept>
 </ccs2012>
\end{CCSXML}

\ccsdesc[500]{Computing methodologies~Generative and developmental approaches}
\ccsdesc[500]{Computing methodologies~Neural networks}
\ccsdesc[500]{Computing methodologies~Artificial intelligence}

%
\keywords{generative models, diffusion models, molecular graph generation, uncertainty estimation, trustworthy AI}




\maketitle

\section{Introduction}
Generative models provide probabilistic frameworks for approximating complex data distributions and synthesizing new samples from learned generative procedures. By sampling from high-dimensional latent manifolds through a prescribed inference process, they have become powerful tools in automation, simulation, and scientific discovery. Among these approaches, diffusion models (DMs)~\cite{DDPM} have emerged as a prominent paradigm due to their effectiveness in image generation and beyond~\cite{DDIM,rfdiffusion,stablediffusion}. A DM consists of a forward process that gradually corrupts data with Markovian noise until it becomes indistinguishable from Gaussian noise, and a reverse process parameterized by a neural network (i.e., neural denoiser) that iteratively denoises these corrupted states. At inference time, the model generates new samples by simulating this learned reverse trajectory, reconstructing structured data from pure noise. 

Recent work has extended DMs to 3D molecular graph generation~\cite{EDM,GeoLDM,RADM}, where the reverse process gradually reconstructs atom types and their 3D positions. 
Molecular structures are tightly organized geometric objects, and the iterative denoising procedure must recover spatial relationships with sufficient precision for the resulting molecule to remain valid~\cite{controllable,chemicalspace,molecularspace,diffms}. In this setting, uncertainty in the neural denoiser~\cite{modeluncertainty,dropout,uncertaintysurvey,gao2024empowering} can interact with the stochastic components of the reverse diffusion process, influencing how clean geometric structure is recovered during sampling. Meanwhile, uncertainty estimation has become an important tool for characterizing model confidence in generative models~\cite{generativeuncertainty,du2023diffusion,zhang2025artificial}. Together, these perspectives motivate a natural question: \textbf{\textit{how does such uncertainty affect the reverse inference dynamics in molecular diffusion, and can it help us better characterize the stochastic behavior that arises during sampling?}}

In this work, we analyze how the uncertainty issue arises in molecular diffusion models and show that it can artificially inflate the effective noise injected during reverse inference. This phenomenon is particularly detrimental for reconstructing sensitive molecular geometries, where even small deviations can lead to chemically invalid structures. Adopting a Bayesian approximation perspective~\cite{modeluncertainty,dropout}, we introduce \underline{\textbf{UCD}} (\underline{\textbf{U}}ncertainty-\underline{\textbf{C}}alibrated \underline{\textbf{D}}iffusion), a simple yet effective correction to the sampling procedure that explicitly compensates for uncertainty-induced variance inflation, as illustrated in Fig.~\ref{fig:mainfigure}. UCD improves molecular generation quality without modifying the underlying diffusion architecture or training objective, and can be seamlessly applied to a wide range of existing diffusion backbones.

Our main contributions are threefold:
(i) we identify and formalize a previously underexplored failure mode in diffusion-based molecular generation, namely \emph{variance inflation} caused by the interaction between epistemic uncertainty in the denoiser and stochasticity in diffusion inference; (ii) we provide a theoretical analysis characterizing how this variance inflation propagates through the reverse diffusion process and degrades sampling quality, motivating a principled correction;
and (iii) we propose UCD, an inference-time uncertainty calibration method that can be readily integrated into diverse diffusion models. Extensive experiments across multiple benchmarks and diffusion backbones demonstrate that UCD consistently improves molecular generation quality without introducing additional architectural complexity.
These contributions establish UCD as a novel approach to 3D molecular diffusion that combines state-of-the-art performance with calibrated uncertainty for enhanced reliability, effectively bridging the gap between theoretical diffusion models and their practical deployment in real-world 3D molecular graph generation.

\begin{figure*}[h]
    \centering
    \includegraphics[width=\textwidth]{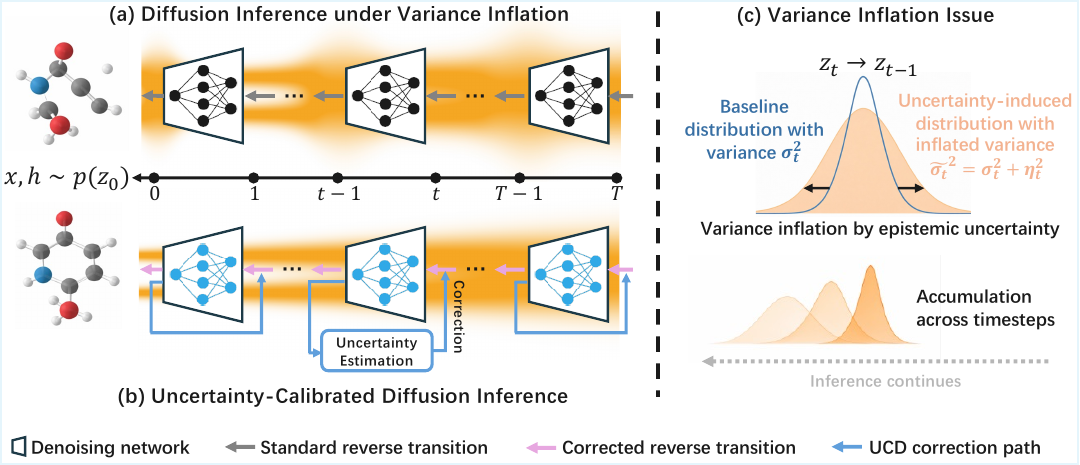}
    \caption{Illustration of variance inflation and uncertainty calibration in diffusion inference.
    \textbf{(a) Diffusion inference under variance inflation.} Standard reverse diffusion is illustrated on a sharp bimodal Gaussian mixture setting. The orange background shadow represents the evolving simulated distribution induced by the denoiser at each reverse step. Epistemic uncertainty in successive denoiser predictions causes this simulated distribution to progressively widen and become less concentrated as inference proceeds, leading to unstable final samples. \textbf{(b) Uncertainty-calibrated diffusion inference.} UCD introduces an uncertainty estimation and correction mechanism that stabilizes the reverse transitions. On the same background example, the simulated distribution remains concentrated across timesteps, resulting in consistent reverse trajectories and reliable final structures. \textbf{(c) Variance inflation issue.} At a single reverse step $\boldsymbol{z}_t \rightarrow \boldsymbol{z}_{t-1}$, epistemic uncertainty locally inflates the model transition variance, and this effect accumulates across successive reverse steps.
    }
    \label{fig:mainfigure}
    \vspace{-6pt}
\end{figure*}
\section{Preliminaries and Related Work}

\subsection{Epistemic Uncertainty}
\label{sec:model_uncertainty}


Neural networks are trained with data samples to predict the underlying function, but instead of being a solver which produces deterministic results, their outputs are modeled as Gaussian-distributed with uncertainty due to two sources: (1) inherent noise in the training data, which leads to the classical interpretation of MSE as a Gaussian likelihood~\cite{MSEtoGaussian}; (2) uncertainty in the model parameters arising from limited data and stochastic training, which leads to an approximately normal predictive distribution~\cite{dropout}. Both of these sources contribute to the predictive uncertainty of the neural network, which we refer to as \emph{epistemic uncertainty}~\cite{uncertaintysurvey}. Throughout the paper, we use the term epistemic uncertainty in this sense.

Epistemic uncertainty is typically estimated using three families of methods: deep ensembles~\cite{deepensambles,kulichenko2023uncertainty}, heteroscedastic regression~\cite{tan2023single}, and dropout-based uncertainty~\cite{dropout,gao2024empowering}. Deep ensembles require training multiple independent networks that predict the same target, and uncertainty is estimated from the variance across ensemble members. However, training several large generative backbones and selecting appropriate ensemble candidates is prohibitively expensive, making this approach impractical in diffusion settings. Heteroscedastic regression introduces an additional prediction head for the output variance, but this variance becomes part of the optimization objective and may collapse or inflate artificially. Moreover, it provides no true stochasticity and therefore cannot capture epistemic uncertainty. In diffusion models~\cite{DDPM}, predicting an extra variance term additionally interferes with the analytically defined Gaussian reverse process, often degrading generation quality. In contrast, dropout-based uncertainty~\cite{dropout} preserves the backbone architecture and performance while providing a lightweight and effective estimate of epistemic uncertainty.

Note that beyond epistemic uncertainty, \emph{aleatoric uncertainty} is also commonly discussed, which refers to irreducible randomness inherent in the data generation or sampling process~\cite{aleatoric,sabbar2025selective}. In diffusion models, aleatoric uncertainty arises from the stochastic noise injected during reverse inference. Once the diffusion inference configuration is fixed (e.g., noise schedule and sampling strategy), this source of uncertainty is fully determined and is not the focus of this work, as detailed in Sec.~\ref{sec:prelim_diff}.

\subsection{Diffusion Models and Uncertainty Estimation} \label{sec:prelim_diff}
Diffusion models~\citep{DDPM,scorebaseddiffusion} are latent-variable generative models that transform Gaussian noise into complex data samples by simulating a forward-reverse Markov chain defined over a discrete time horizon $t\in\{0,\dots,T\}$. Let $\boldsymbol{x}\sim p(\boldsymbol{x})$ denote a clean data point in $\mathbb{R}^d$, and let $\boldsymbol{z}_t$ denote its progressively noised counterpart. The forward process corrupts data by adding Gaussian noise with variance schedule $\beta_t\in(0,1)$:
\begin{equation}
\begin{aligned}
    p(\boldsymbol{z}_{1:T}\mid \boldsymbol{x})
    &= \prod_{t=1}^T p(\boldsymbol{z}_t \mid \boldsymbol{z}_{t-1}),\\
    p(\boldsymbol{z}_t \mid \boldsymbol{z}_{t-1})
    &= \mathcal{N}\!\left(\sqrt{1-\beta_t}\,\boldsymbol{z}_{t-1},\,\beta_t I_d\right).
\end{aligned}
\end{equation}
By composition, the marginal noising distribution admits a closed form,
\begin{equation}
\begin{aligned}
  &p(\boldsymbol{z}_t\mid\boldsymbol{x})=\mathcal{N}\!\left(\sqrt{\bar{\alpha}_t}\,\boldsymbol{x},\,(1-\bar{\alpha}_t)I_d\right),\\
  &\bar{\alpha}_t=\prod_{s=1}^t \alpha_s,\quad \alpha_s= 1-\beta_s,
\end{aligned}
\end{equation}
which linearly interpolates between the data distribution $p(\boldsymbol{x})$ and the Gaussian prior $p(\boldsymbol{z}_T)\approx\mathcal{N}(0,I)$~\citep{stochasticinterpolants}. The unconditional marginal at time $t$ is then
\begin{equation}
    \label{eq:true_distribution}
    p(\boldsymbol{z}_t)
    = \int p(\boldsymbol{x})\,
    \mathcal{N}\!\left(\boldsymbol{z}_t \,\middle|\, \sqrt{\bar{\alpha}_t}\,\boldsymbol{x},\,(1-\bar{\alpha}_t)I_d\right)\,d\boldsymbol{x}.
\end{equation}
After training, sampling proceeds by simulating the reverse Markov chain 
$q_\theta(\boldsymbol{z}_{0:T})
= q(\boldsymbol{z}_T)\prod_{t=1}^T q_\theta(\boldsymbol{z}_{t-1}\mid\boldsymbol{z}_t)$,
where each reverse transition is modeled as a Gaussian
\begin{equation}
    \label{eq:model_distribution}
    q_\theta(\boldsymbol{z}_{t-1}\mid\boldsymbol{z}_t)
    = \mathcal{N}\!\big(\boldsymbol{\mu}_\theta(\boldsymbol{z}_t,t),\,\sigma_t^2 I_d\big),
\end{equation}
with mean predicted by a neural network and variance $\sigma_t^2$ typically fixed according to the SDE or variance-preserving schedule.

Training is performed via the noise-prediction objective~\citep{scorebaseddiffusion}, which encourages the model to recover the forward noise:
\begin{equation}
    \label{eq:trainingloss}
\begin{aligned}
    &\mathcal{L}_{\mathrm{DM}}
    = \mathbb{E}_{\boldsymbol{x},\boldsymbol{\varepsilon},t}\,
      \big[
        \|\boldsymbol{\varepsilon}
        - \boldsymbol{\varepsilon}_\theta(\boldsymbol{z}_t,t)\|^2
      \big],
    \\
    &\boldsymbol{z}_t
    = \sqrt{\bar{\alpha}_t}\,\boldsymbol{x}
    + \sqrt{1-\bar{\alpha}_t}\,\boldsymbol{\varepsilon},
    \quad
    \boldsymbol{\varepsilon}\sim\mathcal{N}(\boldsymbol{0},I_d).
\end{aligned}
\end{equation}
Here $\boldsymbol{\varepsilon}_\theta(\boldsymbol{z}_t,t)$ approximates the score field 
$\nabla_{\boldsymbol{z}_t}\log p(\boldsymbol{z}_t)$~\citep{scoretraining,scorebaseddiffusion}. 
New samples are generated by initializing from noise $\boldsymbol{z}_T\sim\mathcal{N}(0,I_d)$ and iteratively applying the reverse update:
\begin{equation}
\label{eq:reverse_step_uncertainty}
    \boldsymbol{z}_{t-1}
    = \frac{1}{\sqrt{1-\beta_t}}
      \left(\boldsymbol{z}_t
      - \frac{\beta_t}{\sqrt{1-\bar{\alpha}_t}}\,\boldsymbol{\varepsilon}_\theta(\boldsymbol{z}_t,t)\right)
      + \sigma_t \boldsymbol{\varepsilon}.
\end{equation}
Diffusion-based generative models intentionally inject stochastic noise $\boldsymbol{\varepsilon}$ at each reverse step through the simulation of stochastic differential equations (SDEs)~\cite{DDPM,scorebaseddiffusion}. This stochasticity causes the aleatoric uncertainty, which is intrinsic and fully determined once the diffusion inference configuration is fixed.

Epistemic uncertainty, in contrast, originates from the learned denoiser $\boldsymbol{\varepsilon}_\theta$ and reflects imperfect modeling of the data and the training process. \textbf{These two forms of uncertainty, aleatoric uncertainty induced by the SDE simulation and epistemic uncertainty stemming from the learned model, coexist during the reverse diffusion process.} However, in standard diffusion inference, epistemic uncertainty is typically ignored, as inference relies on a single deterministic denoiser prediction optimized to minimize the training objective (e.g., Eq.~\eqref{eq:trainingloss}). \textbf{Their interaction was accordingly overlooked in prior works, which distort the reverse inference dynamics with accumulated errors across timesteps, ultimately leading to degraded sample quality.}

\subsection{Relations with Prior Work}
\label{similarworks}

Recent years have seen substantial progress in 3D molecular generation, with diffusion-based and flow-based models emerging as a dominant paradigm for generating 3D molecular graphs. Representative works model a 3D molecule with $N$ atoms as a combination of discrete atomic features 
\(\boldsymbol{h}=(\boldsymbol{h}_1,\dots,\boldsymbol{h}_N)\in \mathbb{R}^{N\times d}\) 
and continuous three-dimensional coordinates 
\(\boldsymbol{x}=(\boldsymbol{x}_1,\dots,\boldsymbol{x}_N)\in \mathbb{R}^{N\times 3}\), 
defining the molecule directly on its original data manifold~\citep{liu2022spherical,wang2022comenet}. 
Such approaches (e.g., EDM, GeoLDM, and RADM) achieve strong performance 
in terms of chemical validity and stability~\citep{EDM, GeoLDM, RADM}.
Building upon these generative frameworks, subsequent studies explore improved probability paths to enhance molecular generation quality and sampling efficiency~\citep{GeoBFN, SLDM,feng2025unigem,gao2026scaling}. Another line of research focuses on developing more effective sampling strategies to improve sample quality or reduce inference cost~\citep{qu2026gaga, Revisiting_Sampling}. While these methods improve sampling efficiency or geometric fidelity, they generally treat the neural denoiser as deterministic at inference time. \textbf{Consequently, an imperfectly learned denoiser with non-negligible predictive uncertainty can induce deviations from the ideal probability path. Such deviations may accumulate across sampling steps and lead to a progressive drift away from the data manifold.}

Motivated by the impact of uncertainty on model reliability, uncertainty estimation has been increasingly explored in diffusion models as a tool to improve robustness and efficiency. One line of work leverages uncertainty estimation for \emph{active learning} and \emph{data selection}, where uncertainty serves as a data-driven criterion for guiding data acquisition, annotation, or training-time decisions~\cite{huuncertain,du2023diffusion,barba2025diffusion}. In active domain adaptation, uncertainty extracted from a diffusion model is used to guide the selection of unlabeled samples for annotation or refinement, enabling more effective adaptation under limited supervision~\cite{du2023diffusion}. In safety-critical applications, such as diffusion-based PDE control, uncertainty estimates act as indicators of sample reliability, allowing only confident generations to be retained during fine-tuning~\cite{huuncertain}. Similarly, when diffusion models rely on costly or scarce data sources, such as computed tomography measurements, uncertainty-aware acquisition strategies adaptively select the most informative data points, reducing acquisition cost while maintaining reconstruction quality~\cite{barba2025diffusion}. These approaches primarily use uncertainty to guide \emph{training-time} decisions or data acquisition, whereas UCD operates as an \emph{inference-time} correction and is therefore orthogonal in scope.

Another related direction employs uncertainty during inference to filter undesirable trajectories or steer diffusion toward more confident regions by refining denoiser outputs~\cite{jiang2025unpose,kou2023bayesdiff,de2025diffusion,generativeuncertainty}. For example, in object pose estimation, uncertainty derived from a diffusion prior is used to guide alignment and refinement for downstream tasks~\cite{jiang2025unpose}. More generally, uncertainty is treated as a semantic or decision-level signal for selecting reliable samples~\cite{generativeuncertainty} or reweighting predictions~\cite{kou2023bayesdiff,de2025diffusion}. These methods can be viewed as forms of uncertainty-guided inference that keep the reverse diffusion dynamics fixed, using uncertainty only for sample-level selection or post hoc refinement. Consequently, they do not address how epistemic uncertainty alters the underlying reverse transition kernel, nor do they provide a distribution-level characterization of uncertainty accumulation across timesteps. Moreover, many such approaches rely on last-layer Laplace approximations~\cite{LLLA}, which capture only local, readout-level uncertainty under strong linearity assumptions and may be limited for highly nonlinear diffusion dynamics.

In contrast, UCD explicitly identifies epistemic uncertainty as a structural source of variance inflation in the reverse diffusion process itself. Rather than using uncertainty for data selection, sample filtering, or post hoc refinement, \textbf{UCD leverages uncertainty as a mechanism to correct and calibrate the reverse sampling process itself}. This distinction matters for molecular diffusion because small uncertainty-induced perturbations can accumulate along the sampling trajectory and produce invalid geometries. Overall, UCD contributes to the broader study of uncertainty in diffusion models by identifying variance inflation induced by epistemic uncertainty, characterizing its accumulation and impact on sampling quality, and introducing an effective calibration strategy that consistently improves performance across diverse 3D molecular diffusion backbones.

\section{Method}
In this section, we first present empirical evidence revealing the overlooked uncertainty issue in existing diffusion-based molecular generation methods (Sec.~\ref{uncertainty_issue}). We then provide a theoretical analysis of how epistemic uncertainty degrades inference performance during diffusion sampling (Sec.~\ref{inflation}). Finally, we introduce an uncertainty correction strategy UCD that addresses this issue and enables stable and reliable molecular generation (Sec.~\ref{sec:uncertainty_correction}).

\subsection{Uncertainty Issue}
\label{uncertainty_issue}

During diffusion sampling, stochastic noise is injected at each reverse step according to the prescribed diffusion SDE, yielding a fixed level of aleatoric uncertainty~\cite{scorebaseddiffusion,DDIM}. Beyond this analytically defined stochasticity, the reverse transition additionally depends on the denoiser prediction $\boldsymbol{\varepsilon}_\theta(\boldsymbol{z}_t,t)$, whose output is subject to epistemic uncertainty. In practice, standard diffusion inference treats this prediction as deterministic, ignoring uncertainty in the learned reverse drift. As introduced in Sec.~\ref{sec:model_uncertainty}, we model this epistemic uncertainty using a Gaussian approximation and estimate it via a dropout-based Bayesian approach.

In Sec.~\ref{inflation}, we show that uncertainty in the denoiser output does not merely introduce local prediction error, but instead induces an effective inflation of the reverse transition variance. We formalize this phenomenon as \textbf{variance inflation}. As illustrated in Fig.~\ref{fig:mainfigure}(c), epistemic uncertainty enlarges the transition variance at each reverse step, and this effect accumulates along the sampling trajectory. Consequently, the simulated distribution becomes progressively more dispersed (Fig.~\ref{fig:mainfigure}(a)), leading to unstable reverse trajectories and degraded sample quality.

\begin{figure}[h]
    \centering    
\includegraphics[width=\linewidth]{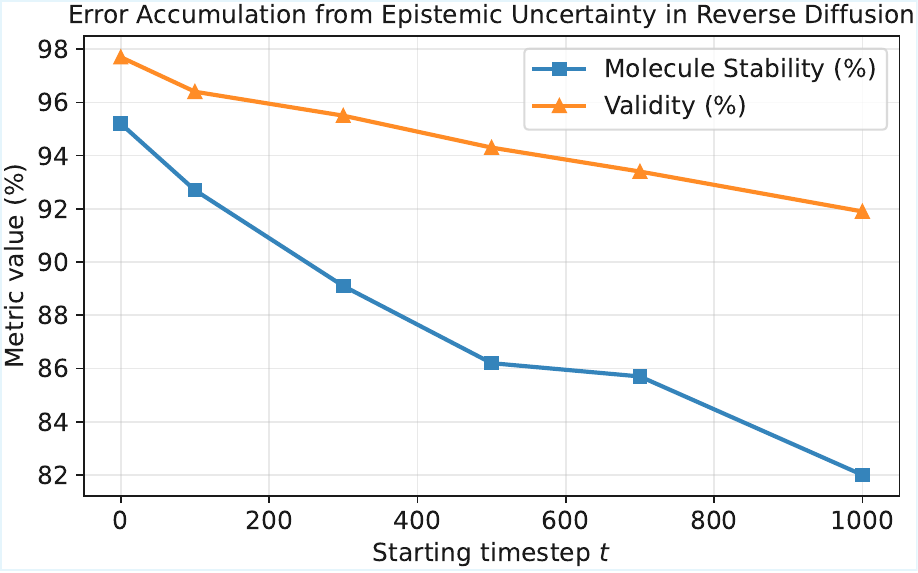}
    \caption{Effect of starting timestep $t$ on molecular generation quality. Samples are initialized from the true forward diffusion distribution at timestep $t$ and then denoised using the learned reverse process. Because the initial state is drawn from the true distribution, performance degradation directly reflects error accumulation induced by epistemic uncertainty in the reverse dynamics. Higher is better for both metrics. Experimental setup follows EDM~\citep{EDM} on QM9~\citep{QM9}, as detailed in Sec.~\ref{experimentsetting}. }
    \label{fig:start_timestep}
\end{figure}
Beyond the effect of distributional dispersion, we further study how the interaction between aleatoric and epistemic uncertainty influences the entire generation trajectory by quantifying the accumulated impact of variance inflation. In Sec.~\ref{sec: Path-space KL}, we analyze this interaction at the trajectory level and show that epistemic uncertainty, when coupled with aleatoric noise in the reverse process, induces a systematic and accumulating distortion of the sampling dynamics. This accumulation substantially enlarges the discrepancy between the simulated and true distributions. As empirically verified in Fig.~\ref{fig:start_timestep}, \textbf{generation quality degrades monotonically as the starting timestep $t$ increases, indicating that the interaction between aleatoric and epistemic uncertainty is consistently present at each reverse transition and accumulates along the probability path rather than arising from isolated or transient errors.} Building on this observation, the following sections formally characterize the accumulation behavior of this interaction along the reverse diffusion process by deriving trajectory level discrepancy measures and end-point deviation bounds, and subsequently propose a principled correction strategy based on their interaction rules during diffusion sampling.

\subsection{Variance Inflation}

\subsubsection{Theoretical Characterization}
\label{inflation}
As discussed above, aleatoric uncertainty from the diffusion SDE and epistemic uncertainty from the learned denoiser coexist during reverse sampling and jointly influence the inference dynamics. In this section, we show that their interaction admits a precise statistical characterization: epistemic uncertainty at inference time effectively enlarges the injected noise in diffusion sampling. We analyze how this additional variance propagates through the reverse chain and induces a non-trivial mismatch at the end-point $t=0$. Throughout this analysis, the trained network parameters $\theta$ and the diffusion schedule $\{\alpha_t,\beta_t\}_{t=1}^T$ are fixed. For clarity, we use $p$ to denote the true (ideal) distribution and $\tilde p$ to denote the simulated distribution induced by sampling with the learned model $\theta$ under epistemic uncertainty.

Consider a single step reverse update as defined in Eq.~\eqref{eq:reverse_step_uncertainty}, we define the reverse mean map
\begin{equation}
f_t(\boldsymbol{z}_t)
~:=~
\frac{1}{\sqrt{\alpha_t}}
\Bigl(
\boldsymbol{z}_t - \frac{\beta_t}{\sqrt{1-\bar\alpha_t}}\,\boldsymbol{\varepsilon}_\theta(\boldsymbol{z}_t,t)
\Bigr),
\label{eq:def_ft}
\end{equation}
so that Eq.~\eqref{eq:reverse_step_uncertainty} can be written as $\boldsymbol{z}_{t-1}=f_t(\boldsymbol{z}_t)+\sigma_t \boldsymbol{\varepsilon}_t$. For convenience, we define the schedule-dependent coefficient
\begin{equation}
\kappa_t ~:=~ \frac{1}{\sqrt{\alpha_t}}\frac{\beta_t}{\sqrt{1-\bar\alpha_t}}.
\label{eq:def_kappa}
\end{equation}
Bayesian inference provides a principled framework for reasoning about epistemic uncertainty~\cite{modeluncertainty,dropout}. A standard practice is to model epistemic uncertainty by augmenting the deterministic output of the learned denoiser $\boldsymbol{\varepsilon}_\theta$ with additive Gaussian noise:
\begin{equation}
\tilde{\boldsymbol{\varepsilon}}_\theta(\boldsymbol{z}_t,t)=\boldsymbol{\varepsilon}_\theta(\boldsymbol{z}_t,t)+\boldsymbol{\delta}_t,
\qquad
\boldsymbol{\delta}_t\sim\mathcal N(\boldsymbol{0},\tau_t^2 I_d),
\label{eq:uncertainty_model}
\end{equation}
where $\tau_t^2$ may be timestep-dependent, capturing heteroscedastic epistemic uncertainty.
Substituting Eq.~\eqref{eq:uncertainty_model} into the reverse update in Eq.~\eqref{eq:reverse_step_uncertainty} and using the definitions of $f_t$ and $\kappa_t$, we obtain
\begin{equation}
\boldsymbol{z}_{t-1}=f_t(\boldsymbol{z}_t) - \kappa_t \boldsymbol{\delta}_t + \sigma_t \boldsymbol{\varepsilon} .
\label{eq:reverse_with_uncertainty}
\end{equation}
Let $\eta_t^2 := \kappa_t^2\tau_t^2$, then we have $\kappa_t \boldsymbol{\delta}_t\sim\mathcal N(\boldsymbol{0},\eta_t^2 I_d)$
and the following characterization of the effective reverse transition.

\begin{restatable}[Effective reverse transition under epistemic uncertainty]{proposition}{effectivekernel}
\label{prop:effective_kernel}
Under Eq.~\eqref{eq:uncertainty_model}, the conditional distribution of $\boldsymbol{z}_{t-1}$ given $\boldsymbol{z}_t$ is
\begin{equation}
\begin{aligned}
\tilde K_t(
\boldsymbol{z}_{t-1}\mid \boldsymbol{z}_t)
&= \mathcal N\!\bigl(
\boldsymbol{z}_{t-1};\, f_t(\boldsymbol{z}_t),\, \tilde\sigma_t^2 I_d
\bigr),
\end{aligned}
\label{eq:unc_kernel}
\end{equation}
where $\tilde\sigma_t^2 := \sigma_t^2 + \eta_t^2$.
And the true (no-epistemic-uncertainty) reverse kernel is
\begin{equation}
K_t(\boldsymbol{z}_{t-1}\mid \boldsymbol{z}_t)
~=~
\mathcal N\!\bigl(\boldsymbol{z}_{t-1};\, f_t(\boldsymbol{z}_t),\,\sigma_t^2 I_d\bigr).
\label{eq:base_kernel}
\end{equation}
\end{restatable}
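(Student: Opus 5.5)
The proof is a direct computation with Gaussians. We condition on $\boldsymbol{z}_t$ and use the representation in Eq.~\eqref{eq:reverse_with_uncertainty}, namely $\boldsymbol{z}_{t-1}=f_t(\boldsymbol{z}_t)-\kappa_t\boldsymbol{\delta}_t+\sigma_t\boldsymbol{\varepsilon}$. Given $\boldsymbol{z}_t$, the map $f_t(\boldsymbol{z}_t)$ from Eq.~\eqref{eq:def_ft} is a deterministic vector, because $\theta$ and the schedule are fixed and $\boldsymbol{\varepsilon}_\theta(\boldsymbol{z}_t,t)$ is a deterministic function of its inputs. The only randomness is therefore the sum $-\kappa_t\boldsymbol{\delta}_t+\sigma_t\boldsymbol{\varepsilon}$. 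We need one modelling assumption, which is implicit in Eq.~\eqref{eq:uncertainty_model} and the reverse update: the epistemic perturbation $\boldsymbol{\delta}_t$ and the injected sampling noise $\boldsymbol{\varepsilon}\sim\mathcal N(\boldsymbol{0},I_d)$ are independent of each other and of $\boldsymbol{z}_t$. I would state this assumption explicitly at the start.

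The key steps, in order, are as follows.
\begin{enumerate}
\item Since $\boldsymbol{\delta}_t\sim\mathcal N(\boldsymbol{0},\tau_t^2I_d)$, linear scaling gives $-\kappa_t\boldsymbol{\delta}_t\sim\mathcal N(\boldsymbol{0},\kappa_t^2\tau_t^2I_d)=\mathcal N(\boldsymbol{0},\eta_t^2I_d)$; the sign does not matter by symmetry. Likewise $\sigma_t\boldsymbol{\varepsilon}\sim\mathcal N(\boldsymbol{0},\sigma_t^2I_d)$.
\item The sum of two independent zero-mean Gaussian vectors is Gaussian, with mean zero and covariance equal to the sum of the covariances, $(\sigma_t^2+\eta_t^2)I_d=\tilde\sigma_t^2I_d$. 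This follows immediately from multiplying characteristic functions, $\exp(-\tfrac12\eta_t^2\|\boldsymbol{u}\|^2)\exp(-\tfrac12\sigma_t^2\|\boldsymbol{u}\|^2)$, or equivalently from convolving the two densities.
\item Translating by the conditionally constant vector $f_t(\boldsymbol{z}_t)$ shifts the mean only. This yields $\tilde K_t(\boldsymbol{z}_{t-1}\mid\boldsymbol{z}_t)=\mathcal N(f_t(\boldsymbol{z}_t),\tilde\sigma_t^2I_d)$, which is Eq.~\eqref{eq:unc_kernel}.
\end{enumerate}

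For the base kernel in Eq.~\eqref{eq:base_kernel}, set $\tau_t=0$, so that $\boldsymbol{\delta}_t\equiv\boldsymbol{0}$ and $\eta_t=0$. Eq.~\eqref{eq:reverse_with_uncertainty} then reduces to the standard update $\boldsymbol{z}_{t-1}=f_t(\boldsymbol{z}_t)+\sigma_t\boldsymbol{\varepsilon}$, whose conditional law is $\mathcal N(f_t(\boldsymbol{z}_t),\sigma_t^2I_d)$. This coincides with Eq.~\eqref{eq:model_distribution} with $\boldsymbol{\mu}_\theta=f_t$.

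The only step needing care is the independence assumption and the fact that both kernels share the same mean $f_t(\boldsymbol{z}_t)$. The epistemic noise enters additively with zero mean, so it perturbs only the second moment, and the inflation $\eta_t^2=\kappa_t^2\tau_t^2$ is exactly additive. I would remark that if $\tau_t^2$ were allowed to depend on $\boldsymbol{z}_t$, the same argument would still hold conditionally, with $\eta_t^2(\boldsymbol{z}_t)$ in place of $\eta_t^2$.
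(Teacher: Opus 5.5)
Your proposal is correct and follows essentially the same route as the paper: condition on $\boldsymbol{z}_t$, treat $f_t(\boldsymbol{z}_t)$ as deterministic, and add the covariances of the independent Gaussians $-\kappa_t\boldsymbol{\delta}_t$ and $\sigma_t\boldsymbol{\varepsilon}$ to obtain $\tilde\sigma_t^2=\sigma_t^2+\eta_t^2$. The base kernel is then read off from the standard update; your $\tau_t=0$ specialization amounts to the same thing. Stating independence from $\boldsymbol{z}_t$ explicitly is a small improvement, since the paper leaves that implicit.
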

The proof is provided in Appendix~\ref{proof3_1}. Proposition~\ref{prop:effective_kernel} shows that inference-time epistemic uncertainty induces a precise and unavoidable \emph{inflation of the reverse transition variance} from $\sigma_t^2$ to $\tilde\sigma_t^2$ at every timestep. Proposition~\ref{prop:effective_kernel} admits a simple interpretation.
At inference time, epistemic uncertainty in the denoiser behaves as an additional source of noise that is injected into each reverse diffusion step.
Although the diffusion SDE prescribes a nominal noise variance $\sigma_t^2$, uncertainty in the model prediction effectively perturbs the reverse update, resulting in an enlarged transition variance $\tilde\sigma_t^2 = \sigma_t^2 + \eta_t^2$.

From this perspective, \textbf{epistemic uncertainty does not merely introduce randomness in the denoiser output, but systematically increases the stochasticity of the reverse kernel itself.}
As illustrated in Fig.~\ref{fig:mainfigure}(a), even when the reverse mean $f_t(\boldsymbol{z}_t)$ is accurate on average, epistemic uncertainty in successive predictions causes the simulated distribution to spread more than intended.
While this inflation may appear modest at a single step, its repeated application across timesteps fundamentally alters the behavior of the reverse diffusion process.
A natural question then arises: \textit{how does such local variance inflation translate into a mismatch between the final sampling
distribution $\tilde p_0$ and the true data distribution $p_0$?}


To obtain results that are both rigorous and assumption-light, we proceed in two complementary steps: (i) we first quantify discrepancy at the level of \emph{entire reverse trajectories} using path-space KL, which admits an exact closed form and captures the cumulative effect of variance inflation across timesteps; and (ii) we then analyze how epistemic uncertainty propagates to the end-point via a \emph{trajectory-local} perturbation recursion, yielding a concrete lower bound on the end-point deviation that directly characterizes how this discrepancy manifests in the final samples under mild non-collapse conditions.
\vspace{-1mm}
\subsubsection{Trajectory-Level Divergence via Path-Space KL}
\label{sec: Path-space KL}

Let $P^{\text{base}}(\boldsymbol{z}_{0:T})$ and $P^{\text{unc}}(\boldsymbol{z}_{0:T})$ denote the \emph{trajectory} distributions generated by the Markov chains using kernels $\{K_t\}$ and $\{\tilde K_t\}$, respectively, starting from the same initialization $p_T$. Let $p_t$ and $\tilde p_t$ denote the corresponding marginals. We define the relative variance inflation ratio
\begin{equation}
r_t ~:=~ \frac{\eta_t^2}{\sigma_t^2}
~=~
\frac{\kappa_t^2\tau_t^2}{\sigma_t^2},
\qquad r_t\ge 0.
\label{eq:def_rt}
\end{equation}
Since $K_t(\cdot\mid \boldsymbol{z}_t)$ and $\tilde K_t(\cdot\mid \boldsymbol{z}_t)$ are Gaussians
with the same mean and isotropic covariances, for any fixed $\boldsymbol{z}_t$,
\begin{equation}
\begin{aligned}
&\mathrm{KL}\!\left(
\tilde K_t(\cdot\mid \boldsymbol{z}_t)\,\|\,K_t(\cdot\mid \boldsymbol{z}_t)
\right)
 \\&=
\mathrm{KL}\!\left(
\mathcal N\!\left(\boldsymbol{0},(1+r_t)\sigma_t^2 I_d\right)
\,\|\,\mathcal N\!\left(\boldsymbol{0},\sigma_t^2 I_d\right)
\right) \\
&= \frac{d}{2}\Bigl(r_t-\log(1+r_t)\Bigr).
\end{aligned}
\label{eq:per_step_kl}
\end{equation}
Notably, Eq.~\eqref{eq:per_step_kl} depends \emph{only} on the inflation ratio $r_t$ and not on the nonlinear map $f_t$.
\begin{restatable}[Exact KL divergence on trajectory space]{proposition}{pathkl}
\label{thm:path_kl}
Assuming both chains share the same initialization $p_T$, the KL divergence between trajectory measures satisfies
\begin{equation}
\mathrm{KL}\!\left(P^{\text{unc}}(\boldsymbol{z}_{0:T})\,\|\,P^{\text{base}}(\boldsymbol{z}_{0:T})\right)
=
\sum_{t=1}^T \frac{d}{2}\Bigl(r_t-\log(1+r_t)\Bigr).
\label{eq:path_kl_sum}
\end{equation}
\end{restatable}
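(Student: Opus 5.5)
The plan is to use the chain rule for KL divergence on Markov path measures. Both $P^{\text{unc}}$ and $P^{\text{base}}$ factor as
\begin{equation*}
P(\boldsymbol{z}_{0:T}) = p_T(\boldsymbol{z}_T)\prod_{t=1}^T K_t(\boldsymbol{z}_{t-1}\mid\boldsymbol{z}_t),
\end{equation*}
with $\tilde K_t$ in place of $K_t$ for the uncertain chain and the same $p_T$ in both. Proposition~\ref{prop:effective_kernel} shows that both kernels are nondegenerate Gaussians, so the two path measures have joint densities with respect to Lebesgue measure on $\mathbb{R}^{d(T+1)}$, and $P^{\text{unc}}\ll P^{\text{base}}$ because $\sigma_t^2>0$. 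The log-likelihood ratio therefore telescopes. The initial factor $p_T$ cancels, leaving
\begin{equation*}
\log\frac{dP^{\text{unc}}}{dP^{\text{base}}}(\boldsymbol{z}_{0:T})=\sum_{t=1}^T \log\frac{\tilde K_t(\boldsymbol{z}_{t-1}\mid\boldsymbol{z}_t)}{K_t(\boldsymbol{z}_{t-1}\mid\boldsymbol{z}_t)}.
\end{equation*}

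Next, I take the expectation under $P^{\text{unc}}$ term by term. For the $t$-th summand, I condition on $\boldsymbol{z}_t$ by the tower property. Under $P^{\text{unc}}$, the conditional law of $\boldsymbol{z}_{t-1}$ given the past $\boldsymbol{z}_{t:T}$ is $\tilde K_t(\cdot\mid\boldsymbol{z}_t)$, by the Markov property. Hence the inner conditional expectation equals $\mathrm{KL}(\tilde K_t(\cdot\mid\boldsymbol{z}_t)\,\|\,K_t(\cdot\mid\boldsymbol{z}_t))$. By Eq.~\eqref{eq:per_step_kl}, this equals the constant $\frac{d}{2}(r_t-\log(1+r_t))$, since both Gaussians share the mean $f_t(\boldsymbol{z}_t)$. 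The outer expectation over $\tilde p_t$ is therefore trivial, and summing over $t$ gives Eq.~\eqref{eq:path_kl_sum}. For completeness, I would verify Eq.~\eqref{eq:per_step_kl} with the standard isotropic Gaussian KL formula $\frac{d}{2}\bigl(\tfrac{\tilde\sigma^2}{\sigma^2}-1-\log\tfrac{\tilde\sigma^2}{\sigma^2}\bigr)$ and $\tilde\sigma_t^2/\sigma_t^2=1+r_t$.

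The main obstacle is not the algebra but the justification for exchanging expectation and summation. This requires each summand to be integrable under $P^{\text{unc}}$. The log-ratio of the two Gaussians is a quadratic in $\boldsymbol{z}_{t-1}-f_t(\boldsymbol{z}_t)$ plus a constant. Conditionally on $\boldsymbol{z}_t$, this residual is exactly $\mathcal N(0,\tilde\sigma_t^2 I_d)$ under $P^{\text{unc}}$, so its law does not depend on $f_t$ or on $\tilde p_t$. Integrability therefore holds with no assumptions on the nonlinear map $f_t$. I also need measurability of $f_t$ (which holds since $\boldsymbol{\varepsilon}_\theta$ is continuous) and the strict positivity $\sigma_t>0$ at every step, including $t=1$. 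I would state the latter explicitly as a standing assumption, since otherwise the base kernel is degenerate and the KL divergence is infinite.
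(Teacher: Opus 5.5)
Your proposal is correct and follows the paper's proof essentially step for step. You factor both path measures with the shared $p_T$, cancel it in the log-ratio, apply the Markov chain rule to reduce to $\mathbb{E}_{\tilde p_t}$ of the one-step KL, and note that $\frac{d}{2}\bigl(r_t-\log(1+r_t)\bigr)$ does not depend on $\boldsymbol{z}_t$; your remarks on absolute continuity, integrability of the log-ratio, and the standing requirement $\sigma_t>0$ (which the paper leaves implicit in the definition of $r_t$, and which can fail, e.g., at $t=1$ under the posterior-variance choice $\sigma_t^2=\tilde\beta_t$) only add rigor the paper omits.
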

We give the derivation in Appendix~\ref{proof3_2}. Proposition~\ref{thm:path_kl} highlights the central role of the variance inflation ratio $r_t$ in determining overall inference behavior. In particular, the path-space KL divergence grows additively across timesteps and is given by
Eq.~\eqref{eq:path_kl_sum}.
Whenever the relative variance inflation $r_t$ is non-negligible over a nontrivial portion of the
reverse trajectory, the quantity
$\sum_{t=1}^T \frac{d}{2}\Bigl(r_t-\log(1+r_t)\Bigr)$
becomes large, certifying a significant divergence between the induced and ideal sampling procedures.

Proposition~\ref{thm:path_kl} makes explicit that the effect of variance inflation is cumulative rather than local.
Although the additional variance $\eta_t^2$ introduced at each step may be small, its contribution to the path-space KL divergence accumulates additively over the $T$ reverse diffusion steps.
\textbf{As a result, even mild epistemic uncertainty, when present across a nontrivial portion of the trajectory, certifies a substantial divergence between the induced sampling process and the ideal reverse diffusion.}

This accumulation perspective provides a direct explanation for the empirical behavior observed in Fig.~\ref{fig:start_timestep}.
As the number of reverse steps increases, molecular quality degrades monotonically, even when initialization is drawn from the true forward distribution.
The theory thus confirms that small uncertainty-induced deviations introduced early in the trajectory propagate and amplify over time, ultimately leading to significant loss of molecular stability and validity at the final timestep.

\vspace{-1mm}
\subsubsection{Local Propagation to the End-point}
While the path-space KL divergence certifies a substantial discrepancy between inference procedures, it does not directly explain how injected variance propagates to the final sample. We therefore complement the KL analysis with a trajectory-local perturbation propagation argument, which explicitly tracks how uncertainty-induced noise is transformed through the nonlinear reverse mapping
and accumulates into a non-negligible end-point deviation.

Consider the ideal sampler and its uncertainty-inflated counterpart under a synchronous coupling:
\begin{equation}
\begin{aligned}
\boldsymbol{Z}_{t-1} &= f_t(\boldsymbol{Z}_t) + \sigma_t \boldsymbol{\varepsilon}, \qquad
\tilde{\boldsymbol{Z}}_{t-1} = f_t(\tilde {\boldsymbol{Z}}_t) + \sigma_t \boldsymbol{\varepsilon} + \eta_t \boldsymbol{u}_t,
\end{aligned}
\label{eq:coupled_chains}
\end{equation}
where $\eta_t=\kappa_t\tau_t$ and all $\boldsymbol{u}_t \sim \mathcal N(\boldsymbol{0},I_d)$ are independent.
Let $\Delta_t:=\tilde{\boldsymbol{Z}}_t-\boldsymbol{Z}_t$ (with $\Delta_T=0$ if initialized identically). Subtracting Eq.~\eqref{eq:coupled_chains} gives the exact recursion
\begin{equation}
\Delta_{t-1}=\bigl(f_t(\boldsymbol{Z}_t+\Delta_t)-f_t(\boldsymbol{Z}_t)\bigr)+\eta_t \boldsymbol{u}_t.
\label{eq:delta_recursion}
\end{equation}
Eq.~\eqref{eq:delta_recursion} makes explicit the two sources of deviation: transport of existing perturbations through the reverse mean map, and fresh isotropic noise injected by epistemic uncertainty at timestep $t$. To quantify how deviations propagate, we measure the sensitivity of $f_t$ along the trajectory. 
By the mean value theorem, for each realization there exists a matrix
\begin{equation}
J_t := \int_0^1 \nabla f_t\!\left(\boldsymbol{Z}_t+s\Delta_t\right)\,ds,
\label{eq:def_Jt}
\end{equation}
such that $f_t(\boldsymbol{Z}_t+\Delta_t)-f_t(\boldsymbol{Z}_t)=J_t\,\Delta_t$ and hence $\Delta_{t-1}=J_t\Delta_t+\eta_t \boldsymbol{u}_t.$
The matrix $J_t$ is a trajectory-dependent linearization of the reverse mean map along the segment connecting $\boldsymbol{Z}_t$ and $\tilde{\boldsymbol{Z}}_t$. To rule out degenerate cases in which the reverse dynamics annihilate all perturbations, we impose a mild non-collapse condition on the sensitivity of $f_t$ along the trajectory.
\begin{assumption}[Bounded sensitivity of the reverse mean map]
\label{ass:sensitivity}
There exist constants $0\le m_t\le L_t$ such that
\begin{equation}
m_t \le \lambda_{\min}(J_t)
\qquad\text{and}\qquad
\lambda_{\max}(J_t)\le L_t,
\label{eq:sensitivity_bounds}
\end{equation}
where $\lambda_{\min}(\cdot)$ and $\lambda_{\max}(\cdot)$ denote the minimum and maximum singular values.
\end{assumption}

Intuitively, this assumption imposes positive lower and upper bounds on the singular values of $J_t$, which quantify how perturbations are transformed by the reverse mean map at each timestep. The lower bound prevents the reverse dynamics from completely collapsing perturbations, while the upper bound rules out uncontrolled amplification of small deviations. Together, these bounds ensure that uncertainty is transported in a stable and well-conditioned manner along the reverse trajectory. It is consistent with empirical observations in diffusion models, where the reverse dynamics are locally smooth and do not exhibit degenerate collapse or explosion on the data manifold although the learned score function is imperfect.

\begin{restatable}[End-point deviation induced by variance inflation]{proposition}{endpointdeviation}
\label{prop:endpoint_deviation}
Under Eq.~\eqref{eq:def_Jt} and Assumption~\ref{ass:sensitivity},
the end-point deviation under the synchronous coupling satisfies
\begin{equation}
d\sum_{t=1}^T \eta_t^2\prod_{s=1}^{t-1} m_s^2
~\le~
\mathbb E\|\Delta_0\|^2
~\le~
d\sum_{t=1}^T \eta_t^2\prod_{s=1}^{t-1} L_s^2,
\label{eq:endpoint_bounds}
\end{equation}
where the empty product is defined as $1$.
\end{restatable}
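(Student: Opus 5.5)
We prove the bounds by a one-step second-moment recursion for $\Delta_t$, run backward from $t=T$ to $t=0$ and then unrolled.

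\textbf{Step 1: one-step identity.} By the mean value representation in Eq.~\eqref{eq:def_Jt}, $\Delta_{t-1}=J_t\Delta_t+\eta_t\boldsymbol{u}_t$. The matrix $J_t$ depends only on $(\boldsymbol{Z}_t,\Delta_t)$. Through Eq.~\eqref{eq:coupled_chains}, these are functions of $\boldsymbol{Z}_T$, the shared noises $\boldsymbol{\varepsilon}$ at steps $T,\dots,t+1$, and the fresh noises $\boldsymbol{u}_T,\dots,\boldsymbol{u}_{t+1}$. Hence $\boldsymbol{u}_t$ is independent of $J_t\Delta_t$ and has zero mean. Expanding the square gives
\begin{equation*}
\mathbb E\|\Delta_{t-1}\|^2=\mathbb E\|J_t\Delta_t\|^2+2\eta_t\,\mathbb E\bigl[\langle J_t\Delta_t,\boldsymbol{u}_t\rangle\bigr]+\eta_t^2\,\mathbb E\|\boldsymbol{u}_t\|^2 .
\end{equation*}
The cross term vanishes by independence (conditioning on $\mathcal F_t$), and $\mathbb E\|\boldsymbol{u}_t\|^2=d$. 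So
\begin{equation*}
\mathbb E\|\Delta_{t-1}\|^2=\mathbb E\|J_t\Delta_t\|^2+d\,\eta_t^2 .
\end{equation*}

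\textbf{Step 2: sandwich the transport term.} Assumption~\ref{ass:sensitivity} bounds the singular values of $J_t$ pointwise for every realization. It follows that $m_t^2\|\Delta_t\|^2\le\|J_t\Delta_t\|^2\le L_t^2\|\Delta_t\|^2$ almost surely. Taking expectations yields the two scalar recursions
\begin{equation*}
m_t^2 D_t+d\eta_t^2\;\le\;D_{t-1}\;\le\;L_t^2 D_t+d\eta_t^2, \qquad D_t:=\mathbb E\|\Delta_t\|^2 .
\end{equation*}
The starting value is $D_T=0$. All coefficients are nonnegative, so both inequalities survive iteration by monotonicity.

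\textbf{Step 3: unroll.} Run the upper recursion from $t=T$ down to $t=1$ (induction on $T-t$). This gives $D_0\le d\sum_{t=1}^T\eta_t^2\prod_{s=1}^{t-1}L_s^2$. The lower recursion gives the same expression with $m_s$ in place of $L_s$.

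To see the product structure, note that noise injected at step $t$ produces $\Delta_{t-1}$. It is then transported through $J_{t-1},\dots,J_1$, which picks up exactly the factors indexed by $s=1,\dots,t-1$. This matches the statement, with the empty product equal to $1$ for $t=1$.

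\textbf{Main obstacle.} The delicate point is the independence claim in Step 1, which kills the cross term. We must make sure that $J_t$, which is random and trajectory dependent, is measurable with respect to the past filtration and does not involve $\boldsymbol{u}_t$. This holds because $J_t$ is built from $\boldsymbol{Z}_t$ and $\Delta_t$ only, and $\boldsymbol{u}_t$ enters only the next state $\tilde{\boldsymbol{Z}}_{t-1}$.

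A second point to state explicitly is that the singular-value bounds are assumed almost surely along trajectories. Otherwise the passage to expectations in Step 2 would need extra care.
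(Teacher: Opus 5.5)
Your proposal is correct and follows the paper's proof essentially step for step. Both expand $\|J_t\Delta_t+\eta_t\boldsymbol{u}_t\|^2$, remove the cross term using independence of the fresh $\boldsymbol{u}_t$ from $(\boldsymbol{Z}_t,\Delta_t)$, sandwich $\mathbb E\|J_t\Delta_t\|^2$ with the singular-value bounds, and unroll the two-sided recursion from $\Delta_T=\mathbf{0}$. Your remarks on the measurability of $J_t$ and on the bounds holding almost surely make explicit what the paper uses implicitly.
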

The derivation is provided in Appendix~\ref{proof3_4}.
Proposition~\ref{prop:endpoint_deviation} characterizes how epistemic uncertainty injected at each reverse step accumulates into an end-point deviation through the reverse dynamics. The lower bound establishes that, in the absence of degeneracy in the reverse mean map, uncertainty-induced perturbations necessarily lead to a non-negligible deviation at the final sample.
Importantly, the upper bound reveals that this deviation does not grow arbitrarily:
the total error is governed by a multiplicative accumulation of per-step uncertainty,
scaled by the local sensitivity of the reverse mean maps. This shows that the impact of epistemic uncertainty is systematic and controlled, rather than unstable or explosive.
\textbf{Together, these bounds imply that while inference-time epistemic uncertainty inevitably degrades sample quality, its effect can be effectively regulated through bounded, step-wise corrections to the reverse transition.} This observation directly motivates the uncertainty-aware noise calibration strategy
introduced in Sec.~\ref{sec:uncertainty_correction}.

Together with the path-space KL analysis, this result provides complementary theoretical evidence that inference-time epistemic uncertainty induces both a substantial algorithmic shift and a tangible, yet controllable, degradation of final sample quality.

\subsection{Uncertainty-Calibrated Diffusion}
\label{sec:uncertainty_correction}
As established in Proposition~\ref{prop:effective_kernel}, the core issue arises from \emph{epistemic uncertainty inflating the effective variance of the reverse transition}.
As a result, the stochasticity injected during inference is no longer purely aleatoric, but instead becomes a mixture of aleatoric noise and accumulated epistemic uncertainty, leading to unstable reverse dynamics and degraded sample quality.

To address this issue, we propose \textbf{UCD}, an \textbf{U}ncertainty-\textbf{C}alibrated \textbf{D}iffusion reverse sampling strategy grounded in the analysis of Sec.~\ref{inflation}. A conceptual illustration of the effect of UCD is shown in Fig.~\ref{fig:mainfigure}(b).
Proposition~\ref{prop:effective_kernel} characterizes how epistemic uncertainty induces systematic variance inflation at each reverse step, while
Propositions~\ref{thm:path_kl}  and~\ref{prop:endpoint_deviation} further show that this inflated variance accumulates through the reverse dynamics in a controlled manner.
Motivated by these results, we correct the reverse transition at each timestep by adjusting the aleatoric uncertainty variance according to the estimated epistemic uncertainty.
Concretely, we modify the scale of the stochastic noise injected by the sampler such that the resulting effective variance compensates for epistemic inflation, while leaving the reverse mean map unchanged.
This step-wise calibration restores a balanced uncertainty level during inference and stabilizes the reverse sampling trajectory.

Our analysis in Sec.~\ref{inflation} is conducted under the standard DDPM-based
reverse diffusion framework~\cite{DDPM,scorebaseddiffusion}, which is also the default inference setting for most 3D molecular diffusion models~\cite{EDM,GeoLDM,RADM}.
Although our primary focus is molecular generation, where strict chemical constraints
demand especially precise reverse inference, the proposed correction naturally extends to more general sampling schemes.
In particular, we consider a generalized inference formulation~\cite{DDIM}, which interpolates between stochastic SDE-based sampling and deterministic ODE-based sampling
via a continuous interpolation parameter.

In practice, we estimate epistemic uncertainty using dropout-based Bayesian inference~\cite{dropout}, which has been shown to provide a lightweight yet effective approximation.
At each reverse timestep, the estimated epistemic variance is used to adjust the
aleatoric uncertainty scale, yielding a corrected reverse update that stabilizes the sampling
trajectory. This correction is especially beneficial for 3D molecular generation, where small accumulated errors can easily violate geometric or chemical constraints. The complete uncertainty-calibrated generalized sampler is summarized in Algorithm~\ref{alg:UCGS}.

\begin{algorithm}[t]
\begingroup
\small
\caption{Uncertainty-Calibrated Generalized Sampler}
\label{alg:UCGS}
\begin{algorithmic}
   \STATE \textbf{Inputs:} dropout denoiser $\boldsymbol{\varepsilon}_\theta$, noise schedule $\{\beta_t,\alpha_t\}_{t=1}^T$, interpolation coefficients $\{\rho_t\}\in[0,1]^T$, dropout passes $M$, aleatoric noise adjustment function $\Phi_t(\cdot,\cdot)$, uncertainty estimator $U(\cdot,M)$
   \STATE \textbf{Output:} generated sample $\boldsymbol{x},\boldsymbol{h} \sim p(\boldsymbol{x},\boldsymbol{h}\mid\boldsymbol{z}_0)$
   \STATE Sample initial noise $\boldsymbol{z}_T \sim \mathcal{N}(\boldsymbol{0},I_d)$

   \FOR{$t = T, T-1, \dots, 1$}

      \STATE \textbf{(Dropout uncertainty estimation)}
      \STATE $\tilde{\boldsymbol{\varepsilon}}_\theta(\boldsymbol{z}_t,t),\;\tau_t\gets U(\boldsymbol{\varepsilon}_\theta(\boldsymbol{z}_t,t), M)$
      \STATE $\tilde{\beta}_t \gets \beta_t\frac{1-\bar{\alpha}_{t-1}}{1-\bar{\alpha}_t}$
      \STATE $\kappa_t \gets \sqrt{1-\bar{\alpha}_{t-1}-{\rho_t}^2\tilde{\beta}_t} \;\tau_t $
      \STATE $\sigma_t \gets \rho_t \sqrt{\tilde{\beta}_t}$
      \STATE \textbf{(Adjustment for aleatoric noise)}
      \STATE $\sigma_t^{\mathrm{adj}} \gets \Phi_t(\kappa_t,\sigma_t)$
      
      \STATE $\hat{\boldsymbol{z}}_0 \gets \dfrac{\boldsymbol{z}_t-\sqrt{1-\bar{\alpha}_t}\,\tilde{\boldsymbol{\varepsilon}}_\theta(\boldsymbol{z}_t,t)}{\sqrt{\bar{\alpha}_t}}$

      \STATE Sample $\boldsymbol{\varepsilon} = (\boldsymbol{\varepsilon}_{x}, \boldsymbol{\varepsilon}_{h}) \sim \mathcal{N}(\boldsymbol{0}, I_d)$
      \STATE Subtract center of mass from $\boldsymbol{\varepsilon}_{x}$
     \STATE \textbf{(Corrected reverse update)}
     \STATE $\boldsymbol{z}_{t-1} \gets \sqrt{\bar{\alpha}_{t-1}}\,\hat{\boldsymbol{z}}_0+\sqrt{1-\bar{\alpha}_{t-1}-{\rho_t}^2\tilde{\beta}_t} \; \tilde{\boldsymbol{\varepsilon}}_\theta(\boldsymbol{z}_t,t)+\sigma_t^{\mathrm{adj}}\boldsymbol{\varepsilon}$
   \ENDFOR
\end{algorithmic}
\endgroup
\end{algorithm}

\section{Experiments}
\label{experimentsetting}
\begin{table*}[t]
\centering
\caption{Results for atom stability, molecule stability, validity, and validity$\times$uniqueness. Baseline results are referred from the original papers.
$\uparrow$ represents higher values indicating better performance. 
All models combined with UCD surpass their original counterparts, and the improved results are shown in \textbf{bold}. 
Globally best results are \underline{underlined}. 
}
\label{tab:mainresults}
\resizebox{1\textwidth}{!}{
\begin{threeparttable}
\resizebox{\linewidth}{!}{
\begin{tabular}{l | c c c c | c c}
    \toprule[1.0pt]
    & \multicolumn{4}{c|}{\shortstack[c]{\textbf{QM9}}} & \multicolumn{2}{c}{\shortstack[c]{\textbf{GEOM-Drugs}}} \\
    \# Metrics & Atom Sta (\%) $\uparrow$ & Mol Sta (\%) $\uparrow$ & Valid (\%) $\uparrow$ & Valid$\times$Unique (\%) $\uparrow$ & Atom Sta (\%) $\uparrow$ & Valid (\%) $\uparrow$ \\
    \midrule[0.8pt]
    Data &  99.0 & 95.2 & 97.7 & 97.7 & 86.5 & 99.9 \\
    \midrule
    {ENF} &  85.0 & 4.9 & 40.2 & 39.4 & - & - \\
    {G-SchNet} & 95.7 & 68.1 & 85.5 & 80.3 & - & - \\

    \midrule[0.3pt]

    {EDM} &  98.7 & 82.0 & 91.9 & 90.7 & 81.3 & 92.6 \\

    \textbf{EDM+UCD} &  \textbf{98.8$\pm$0.0} & \textbf{87.2$\pm$0.2} & \textbf{94.3$\pm$0.1} & \textbf{92.9$\pm$0.2} & \textbf{83.5} & \textbf{95.7} \\

    
    {GeoLDM} &  98.9 & 89.4 & 93.8 & 92.7 & 84.4 & 99.3 \\

    \textbf{GeoLDM+UCD} & \underline{\textbf{99.1$\pm$0.0}} & \underline{\textbf{91.4$\pm$0.1}} & \textbf{95.7$\pm$0.2} & \underline{\textbf{93.1$\pm$0.2}} & \textbf{85.9} & \underline{\textbf{99.5}} \\

    {RADM} &  98.5 & 87.3 & 94.1 & 91.7 & 85.0 & 99.3 \\

    \textbf{RADM+UCD} &  \textbf{98.9$\pm$0.0} & \textbf{88.4$\pm$0.2} & \underline{\textbf{96.4$\pm$0.3}} & \textbf{91.8$\pm$0.2} & \underline{\textbf{89.2}} & \textbf{99.4} \\
    \bottomrule[1.0pt]
\end{tabular}}

\end{threeparttable}
}
\end{table*}

\subsection{Setups}
\label{sec:setups}
\paragraph{Datasets.} We evaluate the effectiveness of our method on two widely used standard benchmarks for 3D molecular generation: QM9~\cite{QM9} and GEOM-Drugs~\cite{GEOM-Drugs}, following standard experimental protocols adopted in prior work~\cite{EDM,GeoLDM,RADM}.
The QM9 dataset contains approximately 130K small organic molecules, each with up to 29 atoms.
We follow the standard split from~\citet{EDM}, using 100K molecules for training, 18K for validation, and 13K for testing.
GEOM-Drugs is substantially larger, consisting of around 420K molecules with an average of 44.4 atoms and up to 181 atoms per molecule.
Following~\citet{EDM}, we select the 30 lowest-energy conformations for each molecule.
Together, these two datasets cover a wide range of molecular sizes and complexities, providing a comprehensive evaluation setting.
\paragraph{Evaluation metrics.} Consistent with prior work~\cite{EDM,GeoLDM,RADM}, we assess generation quality using a suite of standard molecular metrics: \emph{atom stability}, \emph{molecule stability}, \emph{validity}, and \emph{validity$\times$uniqueness}~\cite{graphvae,ENF}. \textit{Atom Stability} measures the percentage of atoms whose number of bonds matches their chemical valence (e.g., H:1, C:4, O:2). \textit{Molecule Stability} reports the percentage of molecules in which all atoms are stable. \textit{Validity} denotes the fraction of molecules that satisfy valence constraints for all atoms, while \textit{Uniqueness} measures the proportion of distinct molecules among valid samples.
For GEOM-Drugs, following prior work, we omit molecule stability and uniqueness metrics, as they are consistently close to $0\%$ and $100\%$, respectively, across all evaluated methods.
\paragraph{Baselines.}
We consider several representative state-of-the-art diffusion models for 3D molecular generation as backbone architectures, including EDM~\citep{EDM}, GeoLDM~\citep{GeoLDM}, and RADM$_\text{DiT-B}$~\citep{RADM}. These models span a diverse range of design choices, including GNN-based and Transformer-based architectures, equivariant and non-equivariant formulations, and generation in either Euclidean or latent spaces. We apply our method as a plug-in module to each backbone and compare against their original implementations without uncertainty correction. In addition, we report results from non-diffusion-based baselines, including ENF~\citep{ENF} and G-SchNet~\citep{G-SchNet}, as provided in their original papers.
\paragraph{Implementation details.} To demonstrate the plug-in nature of our approach and ensure a fair comparison, we strictly use the officially released model settings for all backbone diffusion models. We do not modify any hyperparameters unrelated to Bayesian dropout inference or noise calibration, including noise schedules, encoder--decoder architectures, or dataset partitions.
\subsection{Unconditional Generation} 
\label{sec:mainresults}
Following standard evaluation protocols in 3D molecular generation~\cite{EDM,GeoLDM,RADM}, we generate 10,000 molecules for each model on both QM9 and GEOM-Drugs. All results reported in Table~\ref{tab:mainresults} are evaluated directly on the generated samples without any post-hoc refinement using computational chemistry tools (e.g., Open Babel). For QM9, we report the mean and standard deviation over three independent runs to account for randomness in sampling.
For GEOM-Drugs, performance variance across runs is negligible, and we therefore report single-run results, consistent with prior work.

\textbf{The consistent performance gains observed in Table~\ref{tab:mainresults} across all standard evaluation metrics highlight the urgency of addressing the long-overlooked uncertainty issue in 3D molecular generation.} This phenomenon is universal and cannot be resolved solely through architectural design choices. In particular, the issue persists across diverse modeling paradigms, including GNN-based equivariant EDM~\citep{EDM}, GeoLDM~\citep{GeoLDM}, and Transformer-based non-equivariant RADM~\citep{RADM}. Notably, even for GeoLDM and RADM, which operate in latent space, variance inflation remains consistently evident.
These results demonstrate that UCD effectively mitigates the detrimental impact of epistemic uncertainty in molecular diffusion. Importantly, the observed improvements are achieved without any modification to the underlying model architectures or training targets. As such, UCD provides a model-agnostic, inference-time calibration that complements existing architectural innovations rather than replacing them. Overall, augmenting existing diffusion models with UCD establishes new state-of-the-art performance for 3D molecular diffusion across multiple benchmarks. 

\subsection{Conditional Generation}
To further evaluate the ability of our method to generate molecules with desired properties, we consider conditional generation tasks on QM9 following prior work~\cite{EDM,GeoLDM,RADM}. We report the Mean Absolute Error (MAE) computed using pretrained property predictors, where a smaller deviation from the QM9 ground-truth distribution (lower bound) indicates better performance. Each target property requires training a separate conditional diffusion model. Due to computational constraints, we conduct conditional generation experiments only on the strongest baseline diffusion model, RADM~\cite{RADM}.

The results are summarized in Table~\ref{tab:cond}. \textbf{Across all six conditional tasks, RADM+UCD consistently outperforms the baselines, demonstrating improved accuracy in property-controlled molecular generation.} Combined with the unconditional results in Sec.~\ref{sec:mainresults}, these findings indicate that our approach not only generates more stable molecules, but also more faithfully satisfies target property constraints. Additional experimental details are provided in Appendix~\ref{app:conditional}.

\begin{table}[t]
\caption{Mean Absolute Error (MAE) for molecular property prediction using a pretrained predictor (lower is better). Baseline results are taken from the original papers. The best results are highlighted in \textbf{bold}.}
\label{tab:cond}
\centering
\resizebox{\columnwidth}{!}{
    \begin{tabular}{lcccccc} \toprule
    Property & $\alpha$ & $\Delta\varepsilon$ & $\varepsilon_{\text{HOMO}}$ & $\varepsilon_{\text{LUMO}}$ & $\mu$ & $C_v$ \\
    Unit & Bohr$^3$ & meV & meV & meV & D & $\frac{\text{cal}}{\text{mol}}$K \\ \midrule
    QM9 & 0.10 & 64 & 39 & 36 & 0.043 & 0.040 \\ \hline
    Random & 9.01 & 1470 & 645 & 1457 & 1.616 & 6.857 \\ 
    EDM & 2.76 & 655 & 356 & 584 & 1.111 & 1.101 \\
    GeoLDM & 2.37 & 587 & 340 & 522 & 1.108 & 1.025 \\ 
    RADM & 1.98& 458 & 290 & 383 & 0.814 & 
    0.869 \\ \hline
    RADM+UCD & \textbf{1.94} & \textbf{439} & \textbf{280} & \textbf{368} & \textbf{0.773} & \textbf{0.862}\\
    \bottomrule
    \end{tabular}
    }
\end{table}

\subsection{Efficiency and Ablation on Dropout Passes}
\label{sec:timeefficiency}

UCD introduces minimal additional complexity and only one new hyperparameter: the number of Monte Carlo dropout passes $M$ used for epistemic uncertainty estimation under the Bayesian approximation~\cite{dropout}. In this section, we jointly analyze the computational efficiency and generation quality as a function of $M$.

\paragraph{Training and sampling efficiency.}
For training efficiency, replacing standard linear layers with dropout-enabled layers \textbf{does not increase training cost}. Although dropout is active during training, it requires only a single forward pass per iteration and introduces no additional learnable parameters, resulting in negligible overhead. Moreover, UCD does not alter the diffusion sampling procedure or the total number of reverse steps. As a result, \textbf{the Number of Function Evaluations (NFEs), which is the standard measure of diffusion sampling efficiency, remains unchanged.}

\paragraph{Inference-time overhead.}
The primary computational cost of UCD arises during inference, where epistemic uncertainty is estimated via Monte Carlo dropout. Specifically, estimating uncertainty requires $M$ stochastic forward passes of the denoiser at each reverse step, increasing the \emph{per-step denoiser forward time} while keeping the number of reverse steps fixed. While the original Bayesian dropout formulation applies dropout at every layer, prior work has shown that inserting dropout only in selected layers (e.g., near the network output) is sufficient to obtain reliable uncertainty estimates with low computational cost~\cite{VariationalBayesianlastlayers,Bayesianlastlayernetworks}.

We report both the wall-clock time per single NFE and the corresponding generation quality on QM9 in Table~\ref{tab:efficiency_ablation_M}, measured using the widely adopted foundation model EDM~\cite{EDM} as the backbone. As the number of dropout passes $M$ increases, the per-step inference cost grows due to repeated forward evaluations of the denoiser. In practice, hardware-level parallelism and batch processing keep the overhead modest for small $M$, while beyond $M=10$ the runtime increases approximately linearly. Detailed settings are provided in Appendix~\ref{app:Efficiency}.

At the same time, Table~\ref{tab:efficiency_ablation_M} shows that molecular generation quality improves as $M$ increases from small values due to the improved accuracy of estimated epistemic uncertainty. However, the gains saturate once $M$ reaches $10$. Table~\ref{tab:nfe_time} further reports the corresponding wall-clock time per NFE across different diffusion backbones. While the absolute runtime varies substantially across models due to architectural differences, all backbones exhibit a consistent trend: inference cost increases mildly for small $M$. In particular, \textbf{large $M$ leads to diminishing quality gains but disproportionately higher runtime.} These results indicate that increasing $M$ improves the accuracy of epistemic uncertainty estimation. However, this improvement saturates once $M>10$, beyond which additional dropout passes provide diminishing benefits while incurring higher computational cost. Accordingly, we set $M=10$ as the default choice in our method.

\begin{table}[t]
\centering
\caption{Efficiency--quality trade-off under different numbers of dropout passes $M$ using EDM as the backbone on QM9.
Time is reported as wall-clock milliseconds per single NFE, while all molecular generation metrics are reported as percentages. $\uparrow$ and $\downarrow$ denote metrics where higher values and lower values indicate better performance, respectively.}
\label{tab:efficiency_ablation_M}
\setlength{\tabcolsep}{2pt}
\resizebox{\columnwidth}{!}{
\begin{tabular}{c | c | c c c c}
\toprule
$M$ & Time (ms) $\downarrow$& Atom Sta (\%) $\uparrow$& Mol Sta (\%) $\uparrow$& Valid (\%) $\uparrow$& Valid$\times$Unique (\%) $\uparrow$\\
\midrule
3   & 79.28 & 98.7 & 86.7 & 93.8 & 92.1 \\
5   & 79.66 & 98.7 & 87.0 & 94.0 & 92.4 \\
10  & 81.47 & 98.8 & 87.2 & 94.3 & 92.9 \\
20  & 88.17 & 98.8 & 87.3 & 94.6 & 93.1 \\
30  & 99.54 & 98.8 & 87.4 & 94.6 & 93.0 \\
\bottomrule
\end{tabular}
}
\end{table}

\vspace{8pt}
\begin{table}[H]
\caption{Wall-clock time (ms) per single NFE under different dropout configurations. }
\vspace{-4pt}
\label{tab:nfe_time}
\centering
\small
\setlength{\tabcolsep}{4pt}
\resizebox{\columnwidth}{!}{
\begin{tabular}{lccccc}
\toprule
Method &  $M{=}3$ & $M{=}5$ & $M{=}10$ & $M{=}20$ & $M{=}30$\\
\midrule
EDM+UCD  & 79.28 &  79.66  &  81.47  &  88.17 & 99.54 \\
GeoLDM+UCD  & 74.32 &  78.82  &  80.39  &  84.03 & 95.94\\
RADM+UCD  & 24.29 & 25.51 & 30.10 & 44.67 & 61.36\\

\bottomrule
\end{tabular}}
\end{table}

\balance
\subsection{Comparison with Alternative Uncertainty Estimators}

We further compare dropout-based uncertainty estimation with a
last-layer Laplace approximation, using EDM on QM9 as the backbone.
As shown in Table~\ref{tab:laplace}, both uncertainty estimators improve
over the original EDM sampler, confirming that accounting for epistemic
uncertainty is beneficial for molecular diffusion. However, dropout-based
estimation yields substantially stronger gains, especially on molecule
stability and validity. This result is consistent with our discussion in Sec.~\ref{similarworks}: last-layer Laplace approximation relies on a local linearization
of the model output and therefore captures only readout-level uncertainty,
whereas dropout provides a lightweight stochastic approximation that
better reflects uncertainty in the denoising network during reverse
sampling.

\begin{table}[H]
\centering
\caption{Comparison of uncertainty estimators on QM9 using EDM as the backbone.}
\label{tab:laplace}
\resizebox{\linewidth}{!}{
\begin{tabular}{lcccc}
\toprule
Model & Atom Sta (\%) $\uparrow$ & Mol Sta (\%) $\uparrow$ & Valid (\%) $\uparrow$ & Valid$\times$Unique (\%) $\uparrow$ \\
\midrule
EDM & 98.7 & 82.0 & 91.9 & 90.7 \\
EDM+Laplace & 98.7 & 82.9 & 93.0 & 91.6 \\
EDM+Dropout & 98.8 & 87.2 & 94.3 & 92.9 \\
\bottomrule
\end{tabular}
}
\end{table}

\balance
\section{Conclusion and Future Work}
In this work, we identify a long-overlooked harmful interaction between epistemic uncertainty from the learned denoiser and aleatoric uncertainty introduced by the diffusion process in 3D molecular generation, and formalize its effect as a variance inflation issue in reverse diffusion inference. Building on the theoretical analysis characterizing how this interaction degrades sampling quality, we propose \emph{Uncertainty-Calibrated Diffusion} (UCD), a model-agnostic, inference-time calibration method that directly mitigates this issue. Extensive experimental results demonstrate that augmenting existing diffusion models with UCD consistently improves molecular generation quality across multiple backbones and benchmarks.
Broadly speaking, investigating epistemic uncertainty, together with explainability~\cite{liu20253dgraphx,qu2025rise}, lays the foundation of trustworthy AI for 3D molecular modeling and generation.

This work focuses on diffusion-based \emph{3D molecular} generation, where the term \emph{molecule} refers to relatively small organic molecules such as those in QM9 and GEOM-Drugs. We do not evaluate UCD on macromolecular systems such as 3D materials~\cite{yan2022periodic} or proteins~\cite{wang2023learning}, which form a distinct generation task with different modeling choices and evaluation protocols. This scope is important because small-molecule configuration distributions are highly concentrated and effectively discontinuous with respect to chemical validity: small perturbations to atomic positions can break valency constraints or local chemical rules, while precise but minor changes can yield different valid molecules~\cite{controllable,chemicalspace,molecularspace,diffms}. Consequently, accurate recovery of fine-grained geometry during reverse diffusion is crucial, and uncertainty-induced variance inflation can have a significant effect on stability and validity.

Particularly, extending UCD to protein-level diffusion remains an important but non-trivial direction. Protein conformational spaces are typically smoother and more tolerant to local perturbations; for example, local fluctuations such as side-chain motions often do not alter the global fold, and geometric noise can sometimes be absorbed rather than producing a different valid structure. Protein generation also involves long-range interactions, hierarchical constraints, higher-dimensional configuration spaces, and different structural or functional metrics. A promising future direction is to adapt UCD to hierarchical protein diffusion pipelines by calibrating uncertainty at multiple structural scales, such as backbone and side-chain levels~\cite{wang2023learning}, and by incorporating long-range interaction structure into the uncertainty estimator. We further believe that relating learned uncertainty estimates to underlying chemical structure may yield deeper insight into molecular stability. Overall, our findings highlight that explicitly accounting for epistemic uncertainty at inference time is a critical ingredient for building robust and reliable generative models.








\clearpage



\bibliographystyle{ACM-Reference-Format}
\bibliography{icml2026/references, icml2026/airs}

\appendix
\section*{Appendix}


\section{Proofs}

Here we provide the proofs of Proposition~\ref{prop:effective_kernel}, Proposition~\ref{thm:path_kl} and Proposition~\ref{prop:endpoint_deviation}.

\subsection{Proposition 3.1}
\effectivekernel*
\label{proof3_1}
\begin{proof}

Condition on $\boldsymbol{z}_t$.
The quantity $f_t(\boldsymbol{z}_t)$ is then deterministic, and
the remaining randomness in Eq.~\eqref{eq:reverse_with_uncertainty} is the sum of two independent Gaussian vectors:
$-\kappa_t \boldsymbol{\delta}_t$ and $\sigma_t \boldsymbol{\varepsilon}$.

Since $\boldsymbol{\delta}_t \sim \mathcal{N}(\mathbf{0}, \tau_t^2 I_d)$, scaling by $-\kappa_t$ gives
\begin{equation}
-\kappa_t \boldsymbol{\delta}_t \sim \mathcal{N}(\mathbf{0}, \kappa_t^2 \tau_t^2 I_d).
\label{eq:app_scaled_delta}
\end{equation}
Similarly,
\begin{equation}
\sigma_t \boldsymbol{\varepsilon} \sim \mathcal{N}(\mathbf{0}, \sigma_t^2 I_d).
\label{eq:app_scaled_eps}
\end{equation}
Since $\boldsymbol{\delta}_t$ and $\boldsymbol{\varepsilon}$ are independent, where $\boldsymbol{\delta}_t$ captures epistemic uncertainty of the learned model itself and $\boldsymbol{\varepsilon}$ denotes the stochastic noise injected at each timestep, their linear combination remains Gaussian. In particular,
\begin{equation}
-\kappa_t \boldsymbol{\delta}_t + \sigma_t \boldsymbol{\varepsilon}
~\sim~
\mathcal{N}\!\left(\mathbf{0},\,(\kappa_t^2\tau_t^2+\sigma_t^2)I_d\right).
\label{eq:app_sum_gauss}
\end{equation}
Therefore, from Eq.~\eqref{eq:reverse_with_uncertainty} we obtain the conditional distribution
\begin{equation}
\boldsymbol{z}_{t-1}\mid \boldsymbol{z}_t
~\sim~
\mathcal{N}\!\left(
f_t(\boldsymbol{z}_t),\,
(\sigma_t^2+\kappa_t^2\tau_t^2)I_d
\right).
\label{eq:app_conditional_final}
\end{equation}
Defining $\eta_t^2:=\kappa_t^2\tau_t^2$ and $\tilde\sigma_t^2:=\sigma_t^2+\eta_t^2$ gives
\[
\tilde K_t(\boldsymbol{z}_{t-1}\mid \boldsymbol{z}_t)
=
\mathcal{N}\!\left(
f_t(\boldsymbol{z}_t),\,
\tilde\sigma_t^2 I_d
\right),
\]
which is exactly Eq.~\eqref{eq:unc_kernel}.

Finally, the baseline (no-uncertainty) kernel follows directly from Eq.~\eqref{eq:reverse_step_uncertainty}:
\begin{equation}
\boldsymbol{z}_{t-1}\mid \boldsymbol{z}_t
~\sim~
\mathcal{N}\!\left(
f_t(\boldsymbol{z}_t),\,
\sigma_t^2 I_d
\right),
\end{equation}
which matches Eq.~\eqref{eq:base_kernel}.

\end{proof}

\subsection{Proposition 3.2}

\pathkl*
\label{proof3_2}
\begin{proof}
Let the baseline and uncertainty-inflated reverse chains be Markov measures on the path
$\boldsymbol{z}_{0:T}:=(\boldsymbol{z}_T,\boldsymbol{z}_{T-1},\dots,\boldsymbol{z}_0)$ with the same initialization $p_T$:
\begin{align}
P^{\text{base}}(\boldsymbol{z}_{0:T})
&=
p_T(\boldsymbol{z}_T)\prod_{t=1}^T K_t(\boldsymbol{z}_{t-1}\mid \boldsymbol{z}_t),
\label{eq:app_path_base}
\\
P^{\text{unc}}(\boldsymbol{z}_{0:T})
&=
p_T(\boldsymbol{z}_T)\prod_{t=1}^T \tilde K_t(\boldsymbol{z}_{t-1}\mid \boldsymbol{z}_t),
\label{eq:app_path_unc}
\end{align}
where (from Proposition~\ref{prop:effective_kernel})
\begin{align}
K_t(\boldsymbol{z}_{t-1}\mid \boldsymbol{z}_t)
&=\mathcal N\!\bigl(\boldsymbol{z}_{t-1};\, f_t(\boldsymbol{z}_t),\,\sigma_t^2 I_d\bigr),
\label{eq:app_Kt_def}
\\
\tilde K_t(\boldsymbol{z}_{t-1}\mid \boldsymbol{z}_t)
&=\mathcal N\!\bigl(\boldsymbol{z}_{t-1};\, f_t(\boldsymbol{z}_t),\,(\sigma_t^2+\eta_t^2) I_d\bigr)\\
&=\mathcal N\!\bigl(\boldsymbol{z}_{t-1};\, f_t(\boldsymbol{z}_t),\,(1+r_t)\sigma_t^2 I_d\bigr).
\label{eq:app_tildeKt_def}
\end{align}
By definition,
\begin{equation}
\mathrm{KL}\!\left(P^{\text{unc}}\,\|\,P^{\text{base}}\right)
=
\mathbb E_{\boldsymbol{z}_{0:T}\sim P^{\text{unc}}}
\left[
\log\frac{P^{\text{unc}}(\boldsymbol{z}_{0:T})}{P^{\text{base}}(\boldsymbol{z}_{0:T})}
\right].
\label{eq:app_KL_def}
\end{equation}
Substituting the Markov factorizations Eq.~\eqref{eq:app_path_base}-\eqref{eq:app_path_unc} and canceling the common initial
density $p_T(\boldsymbol{z}_T)$ yields
\begin{align}
\log\frac{P^{\text{unc}}(\boldsymbol{z}_{0:T})}{P^{\text{base}}(\boldsymbol{z}_{0:T})}
&=
\sum_{t=1}^T
\log\frac{\tilde K_t(\boldsymbol{z}_{t-1}\mid \boldsymbol{z}_t)}{K_t(\boldsymbol{z}_{t-1}\mid \boldsymbol{z}_t)}.
\label{eq:app_log_ratio_sum}
\end{align}
Plugging Eq.~\eqref{eq:app_log_ratio_sum} into Eq.~\eqref{eq:app_KL_def} and exchanging expectation and summation gives the standard
chain-rule form for Markov measures:
\begin{equation}
\begin{aligned}
\mathrm{KL}\!\left(P^{\text{unc}}\,\|\,P^{\text{base}}\right)
&=
\sum_{t=1}^T
\mathbb E_{(\boldsymbol{z}_t,\boldsymbol{z}_{t-1})\sim P^{\text{unc}}}
\left[
\log\frac{\tilde K_t(\boldsymbol{z}_{t-1}\mid \boldsymbol{z}_t)}{K_t(\boldsymbol{z}_{t-1}\mid \boldsymbol{z}_t)}
\right]\\
&=
\sum_{t=1}^T
\mathbb E_{\boldsymbol{z}_t\sim \tilde p_t}
\left[
\mathrm{KL}\!\left(\tilde K_t(\cdot\mid \boldsymbol{z}_t)\,\|\,K_t(\cdot\mid \boldsymbol{z}_t)\right)
\right],
\label{eq:app_chain_rule}
\end{aligned}
\end{equation}
where $\tilde p_t$ is the marginal of $\boldsymbol{z}_t$ under $P^{\text{unc}}$.

It remains to compute the one-step conditional KL.
For any fixed $\boldsymbol{z}_t$, both kernels in Equation\eqref{eq:app_Kt_def}-\eqref{eq:app_tildeKt_def} are Gaussians with the
same mean $f_t(\boldsymbol{z}_t)$ and isotropic covariances $\sigma_t^2 I_d$ and $(1+r_t)\sigma_t^2 I_d$.
Therefore,
\begin{align}
&\mathrm{KL}\!\left(\tilde K_t(\cdot\mid \boldsymbol{z}_t)\,\|\,K_t(\cdot\mid \boldsymbol{z}_t)\right)
=
\mathrm{KL}\!\left(\mathcal N(\boldsymbol{0},(1+r_t)\sigma_t^2 I_d)\,\|\,\mathcal N(\boldsymbol{0},\sigma_t^2 I_d)\right)
\nonumber\\
&=
\frac{1}{2}
\left(
\mathrm{tr}\!\left((\sigma_t^2 I_d)^{-1}( (1+r_t)\sigma_t^2 I_d)\right)
-d
+\log\frac{\det(\sigma_t^2 I_d)}{\det((1+r_t)\sigma_t^2 I_d)}
\right)
\nonumber\\
&=
\frac{1}{2}
\left(
d(1+r_t)-d+\log\frac{(\sigma_t^2)^d}{((1+r_t)\sigma_t^2)^d}
\right)
\nonumber\\
&=
\frac{d}{2}\Bigl(r_t-\log(1+r_t)\Bigr).
\label{eq:app_step_kl}
\end{align}
Since Eq.~\eqref{eq:app_step_kl} does not depend on $\boldsymbol{z}_t$, the expectation in Eq.~\eqref{eq:app_chain_rule} drops and we obtain
\begin{equation}
\mathrm{KL}\!\left(P^{\text{unc}}\,\|\,P^{\text{base}}\right)
=
\sum_{t=1}^T \frac{d}{2}\Bigl(r_t-\log(1+r_t)\Bigr).
\end{equation}
\end{proof}

\subsection{Proposition 3.4}

\endpointdeviation*
\label{proof3_4}
\begin{proof}
    We begin by expanding the squared norm:
\begin{align}
\|{\Delta}_{t-1}\|^2
&=
\|J_t {\Delta}_t + \eta_t \boldsymbol{u}_t\|^2 \nonumber\\
&=
\|J_t {\Delta}_t\|^2
+ 2\eta_t \langle J_t {\Delta}_t, \boldsymbol{u}_t \rangle
+ \eta_t^2 \|\boldsymbol{u}_t\|^2.
\label{eq:app_expand}
\end{align}

At timestep $t$, the deviation ${\Delta}_t$ is fully determined by randomness
used at later reverse steps, while $\boldsymbol{u}_t$ is freshly sampled at the
current step and is independent of ${\Delta}_t$.
Therefore,
\[
\mathbb E\!\left[\langle J_t {\Delta}_t, \boldsymbol{u}_t \rangle\right] = 0,
\qquad
\mathbb E\!\left[\|\boldsymbol{u}_t\|^2\right] = d.
\]

Taking expectation in Eq.~\eqref{eq:app_expand} yields
\begin{equation}
\mathbb E\|{\Delta}_{t-1}\|^2
=
\mathbb E\|J_t {\Delta}_t\|^2 + d\eta_t^2.
\label{eq:app_moment_recursion}
\end{equation}

By Assumption~\ref{ass:sensitivity}, the singular values of $J_t$ satisfy
$m_t \le \lambda_{\min}(J_t)$ and $\lambda_{\max}(J_t) \le L_t$.
Hence, for any vector $\boldsymbol{v}$,
\begin{equation}
m_t \|\boldsymbol{v}\| \le \|J_t \boldsymbol{v}\| \le L_t \|\boldsymbol{v}\|.
\label{eq:app_sv_bound}
\end{equation}
Applying this inequality to $\boldsymbol{v}={\Delta}_t$ and taking expectations gives
\begin{equation}
m_t^2\,\mathbb E\|{\Delta}_t\|^2
\le
\mathbb E\|J_t {\Delta}_t\|^2
\le
L_t^2\,\mathbb E\|{\Delta}_t\|^2.
\label{eq:app_Jt_bound}
\end{equation}

Combining Eq.~\eqref{eq:app_moment_recursion} and Eq.~\eqref{eq:app_Jt_bound} yields the two-sided recursion
\begin{equation}
m_t^2\,\mathbb E\|{\Delta}_t\|^2 + d\eta_t^2
\le
\mathbb E\|{\Delta}_{t-1}\|^2
\le
L_t^2\,\mathbb E\|{\Delta}_t\|^2 + d\eta_t^2.
\label{eq:app_two_sided_recursion}
\end{equation}

Unrolling Eq.~\eqref{eq:app_two_sided_recursion} from $t=T$ down to $t=1$ using ${\Delta}_T=\mathbf{0}$ yields
\begin{align}
\mathbb E\|{\Delta}_0\|^2
&\ge
d\sum_{t=1}^T \eta_t^2 \prod_{s=1}^{t-1} m_s^2,
\label{eq:app_lower_unroll}
\\
\mathbb E\|{\Delta}_0\|^2
&\le
d\sum_{t=1}^T \eta_t^2 \prod_{s=1}^{t-1} L_s^2,
\label{eq:app_upper_unroll}
\end{align}
where the empty product for $t=1$ is defined as $1$.

\end{proof}


\section{Experimental Details}

\subsection{Conditional Generation}
\label{app:conditional}

\paragraph{Task.}
Following prior work on conditional diffusion for molecular generation~\cite{EDM}, 
the goal of conditional generation is to sample molecular structures with desired target properties.
Formally, given a property value $c$, we aim to generate atomic positions and features $(\boldsymbol{x},\boldsymbol{h}) \sim p(\boldsymbol{x},\boldsymbol{h}\mid c)$.

\paragraph{QM9 Conditional Properties.}
We consider conditional generation with respect to several molecular properties from the QM9 dataset.
Specifically, $\alpha$ denotes the molecular polarizability, which measures the tendency of a molecule to acquire an induced dipole moment under an external electric field.
$\varepsilon_{\mathrm{HOMO}}$ and $\varepsilon_{\mathrm{LUMO}}$ correspond to the energies of the highest occupied and lowest unoccupied molecular orbitals, respectively, and their difference $\Delta\varepsilon = \varepsilon_{\mathrm{LUMO}} - \varepsilon_{\mathrm{HOMO}}$ defines the HOMO-LUMO gap.
The dipole moment $\mu$ quantifies the separation of positive and negative charge within a molecule.
Finally, $C_v$ denotes the heat capacity measured at $298.15\,\mathrm{K}$.

\paragraph{Method.}
Conditioning is incorporated by augmenting the denoiser with the target property.
Specifically, the reverse diffusion model is parameterized as
${\boldsymbol{\varepsilon}}^c_\theta = \phi(\boldsymbol{z}_t,t,c)$,
where $c$ is concatenated to the node features.
The forward diffusion process remains unchanged.
Training follows the standard variational diffusion objective, with the conditional evidence lower bound decomposing as
\[
\log p(\boldsymbol{x},\boldsymbol{h}\mid c)
\;\ge\;
\mathcal{L}_{c,0} + \mathcal{L}_{c,\text{base}} + \sum_{t=1}^T \mathcal{L}_{c,t},
\]
where the conditional loss terms differ from the unconditional case only through the conditioned denoiser.

At inference time, molecular size $N$ and property value $c$ are first sampled from their empirical joint distribution estimated on the training set.
The conditional diffusion model then generates $(\boldsymbol{x},\boldsymbol{h})$ given $(c,N)$.

\paragraph{Experimental Setup.}
All conditional experiments follow the same setup as the unconditional case and prior EDM-based methods.
An EGNN backbone is used for both the conditional generator and a separately trained property predictor.
The predictor is trained on a half disjoint split of the QM9 dataset and is used only for evaluation.

\subsection{Wall-clock Efficiency}
\label{app:Efficiency}
All experiments are conducted on QM9 with a fixed batch size of 64, on a single NVIDIA RTX A6000 GPU with CUDA~12.4. Backbones and the task are introduced in Sec.~\ref{experimentsetting}.

For uncertainty-calibrated diffusion (UCD), the primary factor affecting wall-clock time is the number of Monte Carlo dropout passes $M$ used to estimate epistemic uncertainty.
Since UCD does not modify the diffusion schedule or the number of sampling steps, the total NFE count remains unchanged.
We therefore focus on per-step inference time to quantify the computational overhead introduced by uncertainty estimation.


\end{document}